\pdfoutput=1
\PassOptionsToPackage{capitalize}{cleveref}
\documentclass[]{mosi}

\usepackage{hyperref}

\usepackage{microtype}
\usepackage{bm}
\usepackage{mathtools}
\usepackage{algorithm}
\usepackage{algorithmic}
\usepackage{multirow}
\usepackage{makecell}

\theoremstyle{plain}
\newtheorem{theorem}{Theorem}[section]

\theoremstyle{definition}

\newtheorem{assumption}[theorem]{Assumption}
\theoremstyle{remark}

\makeatletter
\renewcommand{\fps@figure}{htbp}
\renewcommand{\fps@table}{htbp}
\makeatother

\title{Faster Than Flash: Exploiting Attention Sparsity for Efficient Long-Context Decoding}

\author[1,2]{Zhigeng Liu}
\author[1,2]{Zhiyuan Ning}
\author[1,2]{Ruixiao Li}
\author[1,2]{Xiaoran Liu}
\author[1,2]{Yuerong Song}
\author[3,\dagger]{Min Zhang}
\author[2,\dagger]{Ziwei He}
\author[1,2,\dagger]{Xipeng Qiu}

\affiliation[1]{Fudan University, Shanghai, China}
\affiliation[2]{Shanghai Innovation Institute, Shanghai, China}
\affiliation[3]{Harbin Institute of Technology, Harbin, China}

\contribution[\dagger]{Corresponding author}
\correspondence{\texttt{253108120105@sii.edu.cn}, \texttt{zhangmin2021@hit.edu.cn}, \texttt{ziwei.he@sii.edu.cn}, \texttt{xpqiu@fudan.edu.cn}}

\abstract{The development of long-context Large Language Models (LLMs) is constrained by the memory bandwidth bottleneck and quadratic complexity of the attention mechanism during decoding. To overcome the inherent trade-offs between the memory overhead of metadata-based metrics and the computational inefficiency of adaptive selection strategies, we present \textbf{Faster Flash Decoding (FFD)}, a novel hardware-algorithm co-design framework designed to break the memory wall in long-context decoding. FFD integrates the selector and computer into a fully fused kernel, replacing external metadata indices with content-aware scanning via low-bit quantization. Furthermore, we introduce the top-$\delta$ strategy, which dynamically filters blocks to achieve distribution-adaptive sparsity without global synchronization. Offering a training-free and plug-and-play solution, FFD also enables the reuse of scanning results for computation, achieving up to 11.6$\times$ kernel-level speedup and scaling to 256K context length, with 2.37$\times$ end-to-end throughput improvement. Empirical validation on RULER and LongBench confirms that FFD maintains model accuracy while delivering high-ratio sparsity, with code available at \url{https://github.com/qluoluo/faster-flash-decoding}.
}

\begin{document}

\maketitle

\section{Introduction}
\label{sec:intro}

\begin{figure}[t]
\centering
\includegraphics[width=0.6\textwidth]{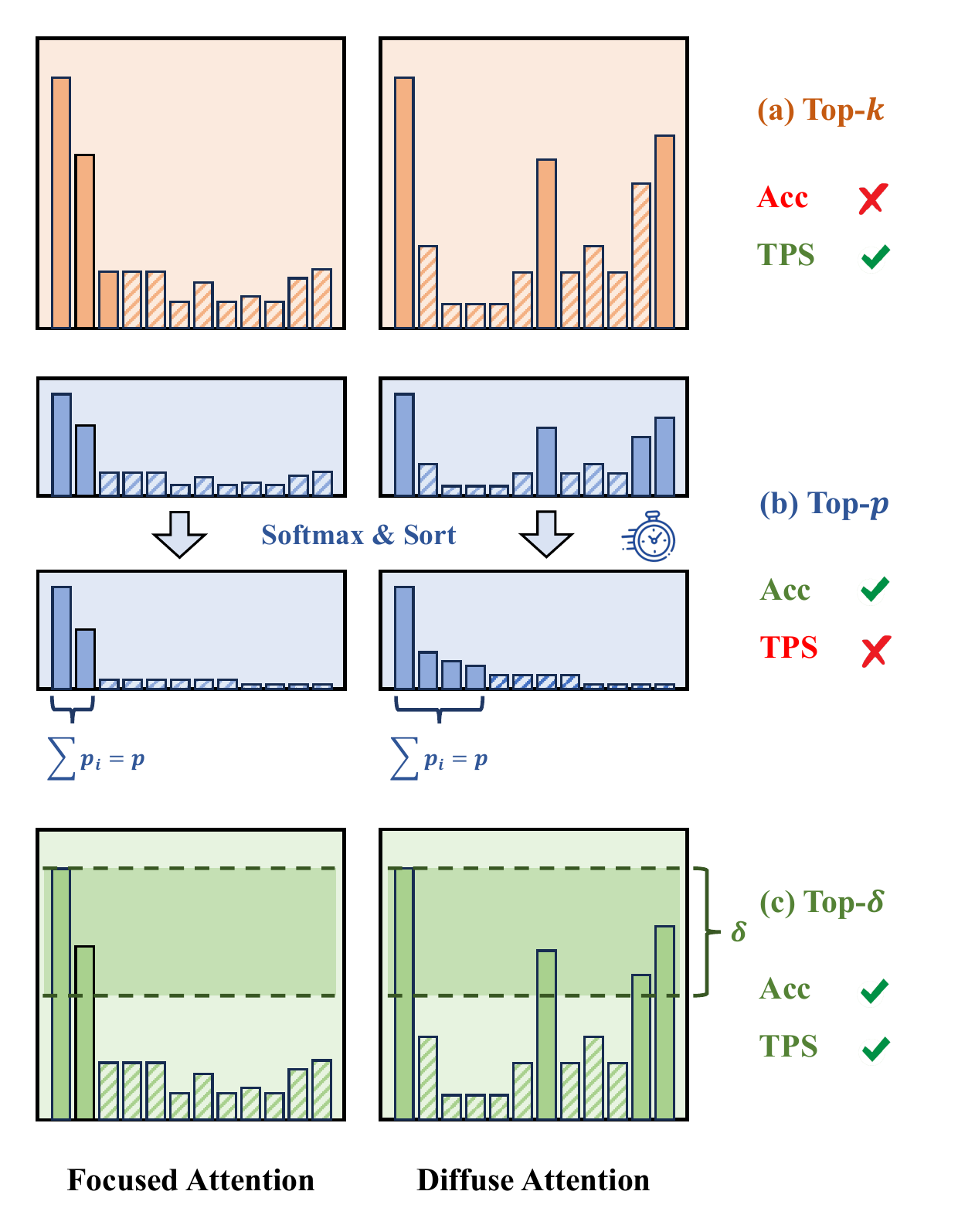}
\caption{Comparison of selection strategies. (a) Top-$k$ (fixed budget) is fast but fails to adapt to distribution shape. (b) Top-$p$ offers dynamic cardinality but introduces non-trivial overhead due to global synchronization and sorting, which breaks the streaming pipeline. (c) Top-$\delta$ (ours) uses a relative threshold $\delta$ from the local max, achieving the adaptivity of top-$p$ with the hardware efficiency of top-$k$.}
\label{fig:comparison}
\end{figure}


The capability to process long contexts has become a defining characteristic of modern Large Language Models (LLMs)~\citep{achiam2023gpt, team2024gemini, dubey2024llama, liu2025thus}. However, long-context capability comes at a prohibitive cost during the decoding phase. As the sequence length grows, the standard attention mechanism exhibits quadratic computational complexity and linear memory growth for Key-Value (KV) caches~\citep{vaswani2017attention}, creating a severe memory wall bottleneck in the decoding stage~\citep{dao2023flashattention, kwon2023efficientmemorymanagementlarge}. Unlike the prefill phase where computation dominates and queries are processed in parallel, the autoregressive decode phase is memory-bandwidth-bound: each token generation requires reloading the entire KV cache from HBM, making IO the primary bottleneck rather than FLOPs. Consequently, sparse attention mechanisms, particularly training-free, plug-and-play approaches, have emerged as a critical research direction to reduce computational overhead without retraining the model~\citep{tang2024quest, ribar2023sparq, lin2025twilight, yuan2025native, liu2025deepseek}.

Despite the diversity of sparse attention techniques, most approaches adhere to a decoupled Selector-Computer paradigm. In this standard workflow, the selector first filters indices based on metrics, and the computer subsequently retrieves data for calculation. This separation prevents the Computer from reusing the Selector's intermediate computations, leading to efficiency bottlenecks. Furthermore, the two core components within this paradigm—the importance metric and the selection strategy—face their own dilemmas. On one hand, to determine which tokens to retain, existing methods typically rely on extra metadata, such as partial dimensions, cluster centroids, or geometric bounds \citep{ribar2023sparq, xiao2024infllm, tang2024quest}. These approaches result in a metric dilemma: they either incur additional memory overhead to store these metadata structures or suffer from information distortion. On the other hand, selecting the optimal subset of tokens remains challenging. As shown in Fig~\ref{fig:comparison}, \textit{top-$k$ selection} is computationally efficient but fails to adapt to the varying entropy of attention distributions\citep{zhang2023h2o, ge2023model}. Comparatively, \textit{top-$p$ selection} offers theoretical superiority by adapting to the distribution, but needs a global softmax operation to compute cumulative probabilities, thus limiting its efficiency~\citep{lin2025twilight}.

In this paper, we propose \textbf{FFD}, \textbf{Faster Flash Decoding}, a novel sparse attention framework that addresses these challenges through a hardware-algorithm co-design. By integrating the selector and computer, FFD enables the computation stage to directly reuse the scanning results from the selection stage. We shift from metadata-based indexing to content-aware scanning. Instead of storing extra metadata, we split the K cache into low-bit quantization and half-precision residual, taking the former as a compressed representation of the KV cache. Then, we introduce a \textbf{top-$\bm\delta$} strategy rooted in numerical precision analysis. Rather than enforcing a fixed budget ($k$) or computing global probabilities ($p$), top-$\delta$ dynamically filters blocks based on their estimated contribution to the attention score sum and can be approximately parallelized thanks to the properties of attention in LLMs~\citep{xiao2023efficient}. We implement FFD as a fully fused kernel that eliminates reduction bubbles, achieving 11.6$\times$ kernel-level speedup. Our contributions are summarized as follows:

\begin{itemize}
    \item \textbf{Content-Aware Scanning via Low-Bit Quantization}. We resolve the ``metric dilemma" by replacing metadata-based indexing with content-aware scanning. By partitioning the K-cache into low-bit quantized representations and 8-bit residuals, we eliminate the memory overhead of extra metadata while maintaining high information fidelity.
    
    \item \textbf{The Top-$\bm\delta$ Selection Strategy}. We introduce top-$\delta$, a novel selection mechanism, unlike top-$k$ (fixed budget) or top-$p$ (high overhead), top-$\delta$ dynamically filters attention blocks based on their contribution to the attention sum, offering distribution-adaptive sparsity with the efficiency of parallel execution.
    
    \item \textbf{Faster Flash Decoding (FFD)}. We present FFD, a hardware-algorithm co-design framework tailored for the bandwidth-critical decode phase. We implement a fused selector-computer kernel where the calculation phase reuses the selector's quantized scan, optimizing memory access patterns. Our implementation achieves up to 11.6$\times$ kernel-level speedup and scales to 256K context, with up to 2.37$\times$ end-to-end throughput increase while remaining a plug-and-play, training-free solution.
    
    \item \textbf{Extensive Empirical validation}. We demonstrate that FFD maintains the performance of LLMs across various short and long-context benchmarks, including RULER~\citep{hsieh2024ruler} and LongBench~\citep{bai2024longbench}, proving that high-ratio sparsity can be achieved without retraining or sacrificing accuracy.
\end{itemize}


\begin{figure*}[t]
\centering
\includegraphics[width=\textwidth]{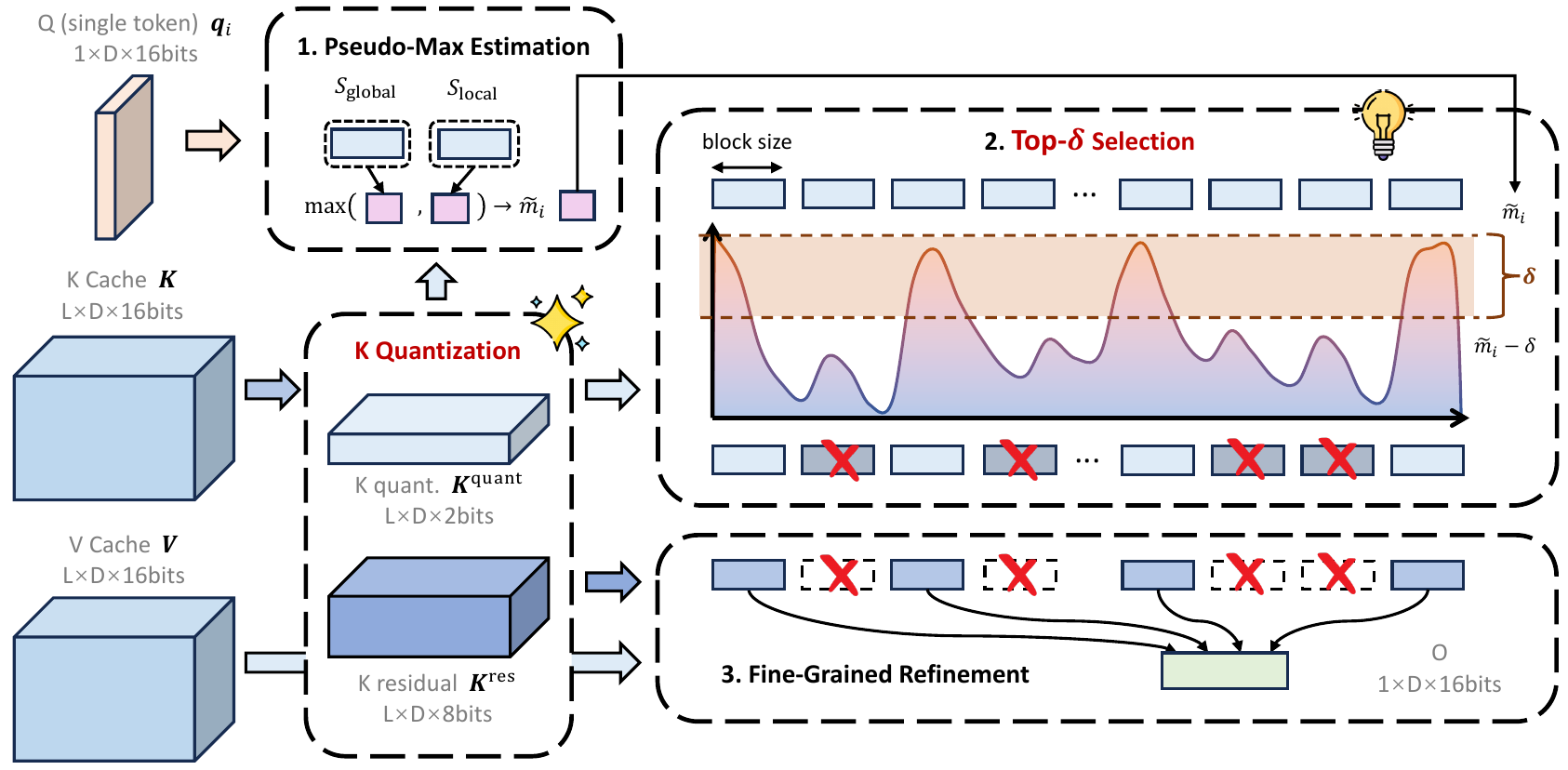}
\caption{Overview of our FFD implementation. Inputs include the 16-bit query as well as the value cache and the key cache split into a 2-bit quantized thumbnail and an 8-bit residual. The attention output is acquired after K quantization, pseudo-max estimation, top-$\delta$ selection, as well as filtering based on the top-$\delta$ strategy, and fine-grained refinement.}
\label{fig:architecture}
\end{figure*}

\paragraph{Conflict of Interest Disclosure}
We declare no financial conflicts of interest related to this work.

\section{Related Work}
\label{sec:related}

\paragraph{Hardware-Aware Exact Attention.}
The standard attention mechanism imposes a quadratic cost during prefilling and a linear cost per step during decoding, creating a significant bottleneck for long-context applications. FlashAttention~\citep{dao2023flashattention, shah2024flashattention} revolutionized this field by introducing IO-aware optimizations, employing tiling and kernel fusion to minimize High-Bandwidth Memory (HBM) access. FlashDecoding and FlashDecoding++~\citep{hong2023flashdecoding++} further adapted these principles to the generative phase by parallelizing attention computation across the sequence dimension.
While FFD inherits the spirit of kernel fusion and hardware-algorithm co-design from these works, existing Flash-based methods fundamentally remain \textit{exact} attention mechanisms. As context length increases, they must still load and scan the entire KV cache, resulting in unavoidable linear growth in latency. FFD addresses this by integrating sparsity directly into the fused kernel, breaking the linear scanning barrier while preserving hardware efficiency.

\paragraph{Dynamic Sparse Attention}
To circumvent the costs of exact attention, various sparse attention mechanisms have been proposed. These methods generally face specific challenges in importance estimation (the Metric Dilemma) and subset filtering (the Selection Dilemma).
Regarding the Metric Dilemma, methods like H2O~\citep{zhang2023h2o}, Scissorhands~\citep{liu2023scissorhands}, and SnapKV~\citep{li2024snapkv} rely on historical accumulation to identify "heavy hitters" for token eviction. However, purely history-based metrics risk discarding tokens that become relevant only in future contexts. To address this, dynamic retrieval methods such as SparQ~\citep{ribar2023sparq} and Quest~\citep{tang2024quest} employ auxiliary metadata (e.g., mean vectors or Min-Max bounds) to estimate block importance. Similarly, InfLLM~\citep{xiao2024infllm} utilizes representative blocks for neighborhood search. While these metadata-based approaches reduce scanning costs, they suffer from information distortion and require additional memory management.
Regarding the Selection Dilemma, most approaches (e.g., Quest, SparQ) enforce a static \textit{top-$k$} budget, which lacks flexibility across varying attention entropy distributions. Twilight~\citep{lin2025twilight} attempts to adapt to various attention distributions but introduces a global softmax operation, thus limiting its efficiency. Concurrent to our work, SALE~\citep{ji2025salelowbitestimation} independently explored low-bit estimation with anchor-based selection for sparse attention. However, SALE targets the prefill stage and employs 4-bit quantization, doubling the scanning bandwidth compared to our 2-bit design tailored for bandwidth-critical decoding.
In contrast, FFD performs content-aware scanning via a fused Selector-Computer kernel, bypassing the need for auxiliary structures and enabling a dynamic \textit{top-$\delta$} selection strategy that adapts to entropy changes on the fly.

\paragraph{KV Cache Quantization}
KV cache quantization is primarily employed to alleviate memory capacity constraints. Methods such as KIVI~\citep{liu2024kivi}, Atom~\citep{zhao2024atom}, and KVQuant~\citep{hooper2024kvquant} compress keys and values into low-precision formats to fit longer contexts into GPU memory. Typically, these frameworks dequantize data back to higher precision (FP16/BF16) before performing the attention computation to preserve accuracy.
FFD fundamentally repurposes quantization. Instead of treating low-bit representations solely as a storage compression technique, we utilize the quantized INT2/4 data as a high-speed proxy for the importance metric itself. This allows our selector to scan the full context with high fidelity and minimal latency, avoiding the decompression overhead typical of standard quantization frameworks while maintaining superior precision compared to metadata-based heuristics.

\section{Method}
\label{sec:method}



In this section, we present \textbf{FFD} (Faster Flash Decoding), a hardware-aware sparse attention framework designed to break the memory wall in long-context decoding. FFD is built upon a hardware-algorithm co-design that systematically resolves the trade-offs in sparse attention through two core innovations: Content-Aware Scanning via Low-Bit Quantization and top-$\bm\delta$ Selection Strategy. FFD utilizes low-bit quantization thumbnails of the K cache to perform high-fidelity importance estimation without additional memory overhead. To adapt to varying attention distributions, the top-$\delta$ criterion is introduced to provide a theoretically grounded, parallelizable alternative to the rigid top-$k$ or computationally expensive top-$p$ selection. We implement a fused operator for the selector and computer to optimize the execution; specifically, the computer reuses the selector's estimation results to continue the computation, thereby accelerating the process while efficiently supporting the adaptive selection. The overall procedure is summarized in Figure~\ref{fig:architecture} and Alg.~\ref{alg:top_delta}.

The remainder of this section is organized as follows. First, we detail the efficient quantization-based scanning mechanism that enables high-fidelity importance estimation. Next, we introduce the formulation and parallel implementation of the top-$\delta$ selection. Finally we describe the system-level optimization and fused kernel design that underpin the high-speed execution of FFD.

\subsection{Content-Aware Scanning via Low-Bit Quantization}


Inspired by advances in KV cache quantization~\citep{liu2024kivi, hooper2024kvquant, yang2024no}, we decouple retrieval fidelity from storage overhead by decomposing the K cache as $\bm{k} \to \{\bm{k}^\text{quant}, \bm{k}^{\text{res}}\}$. Here, $\bm{k}^\text{quant}$ represents 2-bit quantized thumbnails utilized for high-throughput similarity scanning, while $\bm{k}^{\text{res}}$ denotes 8-bit residuals. 

To maximize information entropy within the limited bit-width, we employ a symmetric zero-free (mid-rise) quantization scheme, which ensures that directional information is preserved even for small-magnitude features (details in Appendix~\ref{app:quant}).

This architecture ensures that the scanning phase operates at minimal arithmetic intensity, while the identified candidates are reconstructed to near-FP16 precision for exact computation. A formal concentration bound on the scanning error is provided in Appendix~\ref{app:error_analysis}.

\begin{algorithm}[tb]
   \caption{FFD Decoding with Hybrid Quantization}
   \label{alg:top_delta}
\begin{algorithmic}[1]
   \STATE {\bfseries Input:} Query $\bm{q}_i$, 2-bit Keys $\bm{K}^\text{quant}$, 8-bit Residuals $\bm{K}^\text{res}$, Values $\bm{V}$, threshold $\delta$, sink indices $S_\text{global}$, local indices $S_\text{local}$.
   \STATE {\bfseries Output:} Attention output $o_t$
   \STATE \# Pseudo-max estimation
   \STATE $s_\text{global} \gets \max_{j \in S_\text{global}}\left(\bm{q}_i^\top\cdot \bm{k}^\text{quant}_j\right)$
   \STATE $s_\text{local} \gets \max_{j \in S_\text{local}}\left(\bm{q}_i^\top\cdot \bm{k}^\text{quant}_j\right)$
   \STATE $\tilde{m}_i \gets \max\left(s_\text{global}, s_\text{local}\right)$
   \STATE 
   \STATE \# Top-$\delta$ selection
   \STATE $\mathcal{I}_\text{selected} \gets \emptyset$
   \FOR{each block index $B$}
       \STATE $s^\text{approx} \gets \bm{q}_i \cdot \bm{K}^\text{quant}[B]$
       \IF{$\max(s^\text{approx}) \ge \tilde{m}_i - \delta$}
           \STATE $\mathcal{I}_\text{selected} \gets \mathcal{I}_\text{selected} \cup B$
       \ENDIF
   \ENDFOR
   \STATE 
   \STATE \# Fine-grained refinement
   \STATE $\text{logits} \gets []$
   \FOR{each block index $B$}
       \STATE $\bm{K}^\text{full} \gets \text{Dequant}(\bm{K}^\text{quant}[B], \bm{K}^\text{res}[B])$
       \STATE $s_\text{final} \gets \dfrac{\bm{q}_i \cdot \bm{K}^\text{full}}{\sqrt{d}}$
       \STATE $\text{logits}.\text{append}(s_\text{final})$
   \ENDFOR
   \STATE $o_t \gets \text{Softmax}(\text{logits}) \cdot V[\mathcal{I}_\text{selected}]$
\end{algorithmic}
\end{algorithm}



\subsection{Top-$\bm\delta$ Selection Criterion}
\label{sec:ffd_selection}


Let $s_{ij} = \dfrac{\bm{q}_i^\top\bm{k}_j}{\sqrt{d}}$ be the pre-softmax attention score between query vector $\bm{q}_i$ and key vector $\bm{k}_j$. Ideally, we aim to retain only those tokens whose attention weight contributes significantly to the distribution. We define a retention condition based on the relative magnitude $\delta$ compared to the maximum attention score $m_i = \max_j s_{ij}$, and thus only calculate the attention over the KV cache where
\begin{equation}
\label{eq:threshold}
s_{ij} \ge m_i - \delta.
\end{equation}
This additive threshold in the log-space, $\delta$, translates to a rigorous multiplicative bound in the probability space. By exponentiating Eq.~\ref{eq:threshold}, we observe that a retained token $j$ must satisfy
\begin{equation}
\frac{\exp(s_{ij})}{\exp(m_i)} \ge e^{-\delta}.
\end{equation}
This provides a clear physical interpretation that $\delta$ controls the maximum allowable attention drop-off. For instance, setting $\delta=5$ guarantees that we discard only tokens whose contribution is less than $e^{-5} \approx 0.67\%$ of the peak attention mass. This allows our FFD to adaptively vary the retrieval budget based on the sharpness of the attention distribution, rather than a fixed top-$k$ and top-$p$ as shown in Fig.~\ref{fig:comparison}.

Implementing Eq.~\ref{eq:threshold} strictly requires computing the global max $m_i$, which incurs the same synchronization overhead as softmax. Fortunately, thanks to the phenomenon that attention distributions are typically dominated by either local context or initial sink tokens~\citep{xiao2023efficient}, we introduce the pseudo-max approximation $\tilde{m}_i$ by estimating the global maximum using only these accessible subsets, sink tokens $S_\text{global}$ and local context $S_\text{local}$, as follows.
\begin{equation}
\tilde{m}_i = \max{\left( \max_{t \in S_\text{global}} s_{it}, \max_{t \in S_\text{local}} s_{it} \right)}.
\end{equation}
This approximation allows the threshold $\tilde{m}_i - \delta$ to be computed using only locally available data, effectively bypassing the global reduction bottleneck and enabling independent parallel processing of KV blocks.


\subsection{Kernel Optimization}
We implement FFD as a cooperative pipeline comprising three specialized Triton kernels, illustrated in Fig.~\ref{fig:architecture}. The execution flow proceeds as follows:
\begin{enumerate}
    \itemsep0em
    \item \textbf{Pseudo-max estimation}. A lightweight kernel computes the pseudo-max $\tilde{m}$ using only the sink and local tokens.
    \item \textbf{Top-$\bm\delta$ selection}. The main kernel loads 2-bit keys in streaming chunks. It computes tentative scores and compares them against $\tilde{m} - \delta$.
    \item \textbf{Fine-grained refinement}. If a block passes the filter, the kernel loads the corresponding 8-bit residual keys and values.
\end{enumerate}
The add-on computation in the refinement stage is shown as follows and ensures that for selected tokens, the attention score is computed with near-FP16 precision.
\begin{equation}
s_\text{final} = \frac{\bm{q}^\top\bm{k}^\text{quant} + \bm{q}^\top\bm{k}^\text{res}}{\sqrt{d}}
\end{equation}
This two-term form is semantically equivalent to the \texttt{Dequant} step in Algorithm~\ref{alg:top_delta} but corresponds more closely to the actual kernel implementation, where the two dot products are accumulated without materializing a full-precision key tensor.

To further minimize latency, particularly for decoding with small batch sizes where CPU launch overhead dominates, we implement a full-chain CUDA graph strategy. Unlike standard implementations that only capture the attention kernel, we capture the entire decoding step, including MLP, RMSNorm, and residual connections.

However, standard CUDA Graphs are static and incompatible with the dynamic sequence growth of KV caches. We address this with two innovations:
\begin{itemize}
    \itemsep0em
    \item \textbf{Block-wise JIT capture}. We introduce a dynamic graph capture mechanism that triggers re-capture only when the number of full KV blocks changes. This amortizes compilation cost over the block size.
    \item \textbf{Graph-friendly cache}. We replace standard Python slicing (which triggers CPU-GPU synchronization) with tensor-based indexing kernels, ensuring the entire cache update process remains device-side and graph-capturable.
\end{itemize}
This hybrid execution model runs 99\% of steps within a CUDA Graph, falling back to eager mode only at block boundaries for memory allocation.

\section{Experiments}
\subsection{Setup}
\label{sec:experiments_setup}

\textbf{Device} 
We conduct evaluations on two distinct GPU architectures, NVIDIA H100, representing datacenter-grade hardware with high memory bandwidth, and NVIDIA GeForce RTX 4090, representing consumer-grade hardware. All experiments are implemented using PyTorch 2.8 and Triton 3.4 under CUDA 12.8.

\textbf{Baselines} 
We compare FFD with full attention accelerated by FlashAttention-2~\citep{dao2023flashattention}, Quest~\citep{tang2024quest}, KIVI~\citep{liu2024kivi}, and Twilight~\citep{lin2025twilight}. Twilight does not provide fully open-source official code at the time of writing; its results are based on our own faithful reproduction following the algorithm described in the original paper. To ensure a fair system-level comparison, we align the token budget as 16k and memory access patterns with hardware characteristics.
For Quest, we utilize the official page size of $P=16$. Fortunately, under this configuration, the memory footprint of the indices among these methods is equivalent to that of our 2-bit data for full scanning, which ensures a fair comparison.

\textbf{Hyperparameter} 
For FFD, we evaluate sparsity thresholds $\delta \in \{5, 7\}$. These values are selected based on their theoretical implications for the probability mass:
\begin{itemize}
    \itemsep0em
    \item $\delta=5$ implies pruning tokens whose attention weight $\exp(s_{ij})$ is less than $e^{-\delta} \approx 1/148$ of the maximum score. This represents an aggressive filtering strategy targeting the top $\sim 1\%$ relevant tokens.
    \item $\delta=7$ corresponds to a threshold of $e^{-7} \approx 1/1096$, serving as a conservative setting that retains tokens with even $\sim 0.1\%$ relative importance to prevent tail-risk information loss.
\end{itemize}

\textbf{Evaluation}

We structure our evaluation into two parts: Efficiency and Effectiveness.

\begin{itemize}
    \itemsep0em
    \item \textbf{Efficiency:} We evaluate system performance at two levels. We use \textit{Trace-Driven Kernel Microbenchmarks} (Section \ref{sec:kernel_perf}) to measure operator latency, and \textit{End-to-End Generation} (Section \ref{sec:e2e}) to test the throughput of the full model.
    
    \item \textbf{Effectiveness:} We validate that our method preserves model quality. We employ standard long-context benchmarks, specifically LongBench and RULER (Section \ref{sec:accuracy}), to assess performance on downstream tasks.
\end{itemize}

\subsection{Kernel Efficiency}\label{sec:kernel_perf}
We benchmark the latency of our fused Triton kernel on a consumer GPU with sequence lengths ranging from 4K to 256K. To minimize launch overheads in the low-latency regime (<1ms), we utilize CUDA Graphs for kernel submission. We compare against the standard PyTorch \texttt{scaled\_dot\_product\_attention} (which utilizes FlashAttention-2).

\begin{figure}[ht]
\centering
\includegraphics[width=0.8\textwidth]{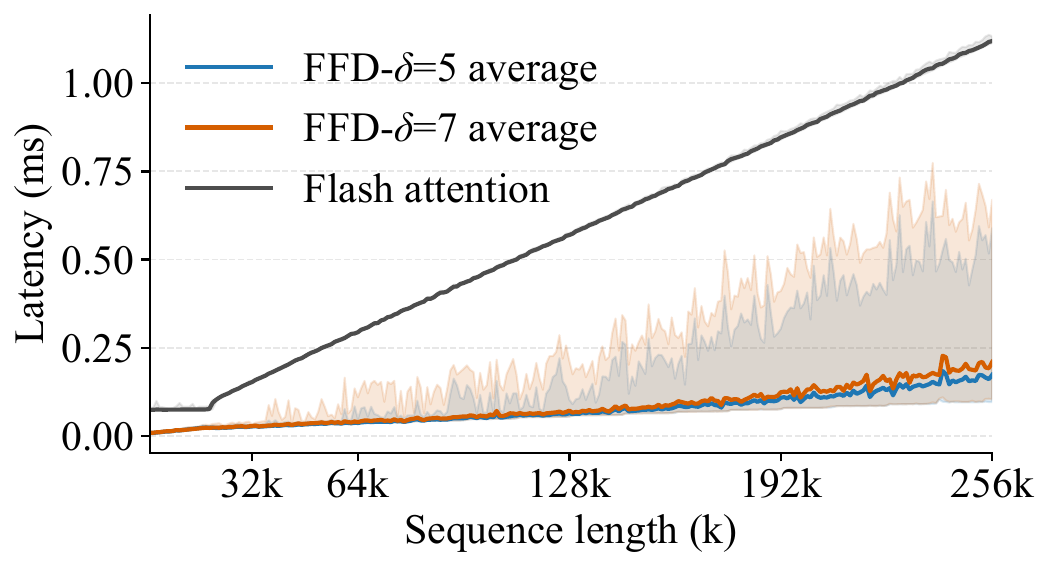}
\caption{Kernel Latency vs. Sequence Length (RTX 4090). Layer-averaged execution time for FlashAttention-2 and FFD variants (batch=1). FFD-$\delta=5$ and $\delta=7$ achieve average speedups of $7.33\times$ and $6.18\times$ respectively, with a peak reduction of $11.63\times$. Error bands denote the min--max range across layers.}
\label{fig:latency_error_band}
\end{figure}

As shown in Figure~\ref{fig:latency_error_band}, our FFD kernel demonstrates linear scaling with sequence length but with a significantly lower slope; error bands indicate the layer-wise min--max range for FFD ($\delta=5$, $\delta=7$) and FlashAttention-2.
At 256K context length, FlashAttention-2 averages 1.12 ms, while FFD ($\delta=5$) and FFD ($\delta=7$) average 0.17 ms and 0.21 ms, corresponding to 6.58$\times$ and 5.33$\times$ speedups.
This dramatic acceleration is achieved by combining:
1. \textit{2-bit IO:} Reducing memory access from 16 bits to 2 bits for the scanning phase.
2. \textit{High Sparsity:} The $\delta=5$ threshold effectively filters a lot of blocks on real Llama-3.1 data.
3. \textit{CUDA Graphs:} Eliminating CPU launch overhead, which is critical when the kernel runtime is sub-100$\mu$s.

\subsection{End-to-End Throughput}\label{sec:e2e}
Beyond kernel-level microbenchmarks, we evaluate the impact of FFD on end-to-end token generation throughput. Figure~\ref{fig:e2e_throughput} compares the maximum throughput (tokens/sec) achievable by FFD, Quest, FlashAttention-2, and KIVI across varying context lengths on both NVIDIA RTX 4090 and H100 GPUs.

On the consumer-grade RTX 4090, FFD consistently achieves the highest throughput, reaching 61.0 tokens/s at 4K and maintaining 51.8 tokens/s at 16K context. This represents a speedup of up to 2.37$\times$ over FlashAttention-2 (at 16K). Quest also performs well but lags behind FFD (e.g., 47.8 vs 51.8 at 16K).

We further scale our evaluation to the datacenter-grade H100. FFD demonstrates strong scalability, reaching 110.6 tokens/s at 4K context and maintaining 87.0 tokens/s at 16K. This corresponds to a 1.96$\times$ speedup over FlashAttention-2 (44.5 tokens/s) at 16K context. Consistent with 4090 results, KIVI continues to underperform dense attention on H100 in this low-batch regime (e.g., 22.4 vs 44.5 at 16K), highlighting the significant cost of dequantization. 

\begin{figure}[ht]
\centering
\includegraphics[width=0.8\textwidth]{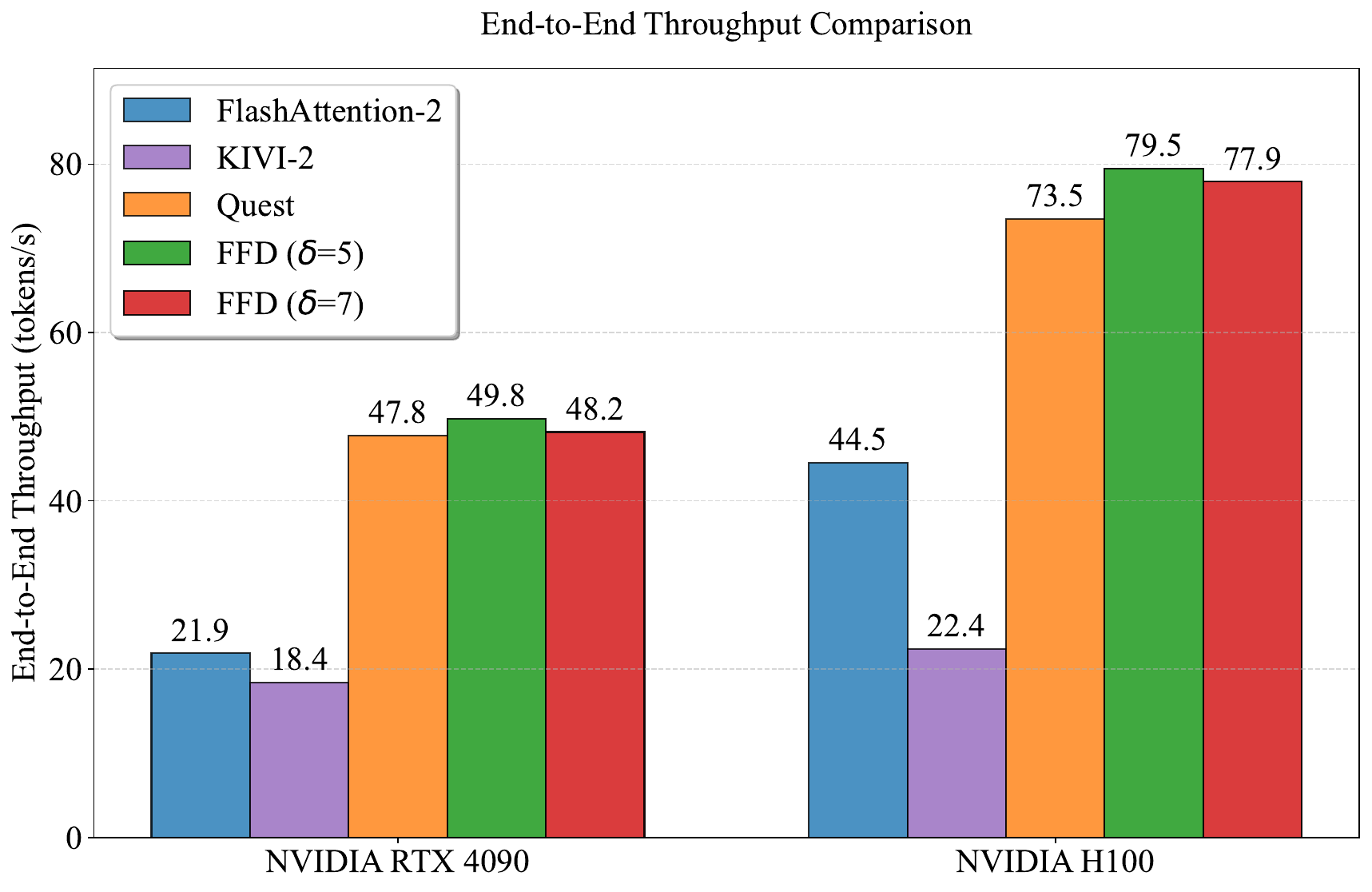}
\caption{End-to-End Generation Throughput Comparison on RTX 4090 and H100 (16K Context). FFD outperforms FlashAttention-2, Quest, and KIVI on both platforms. On H100, FFD achieves up to 1.96$\times$ speedup over the dense baseline and surpasses Quest (e.g., 87.0 vs 73.5 tokens/s at 16K). KIVI suffers from dequantization overheads in this single-batch setting.}
\label{fig:e2e_throughput}
\end{figure}

\begin{figure}[ht]
\centering
\includegraphics[width=0.72\textwidth]{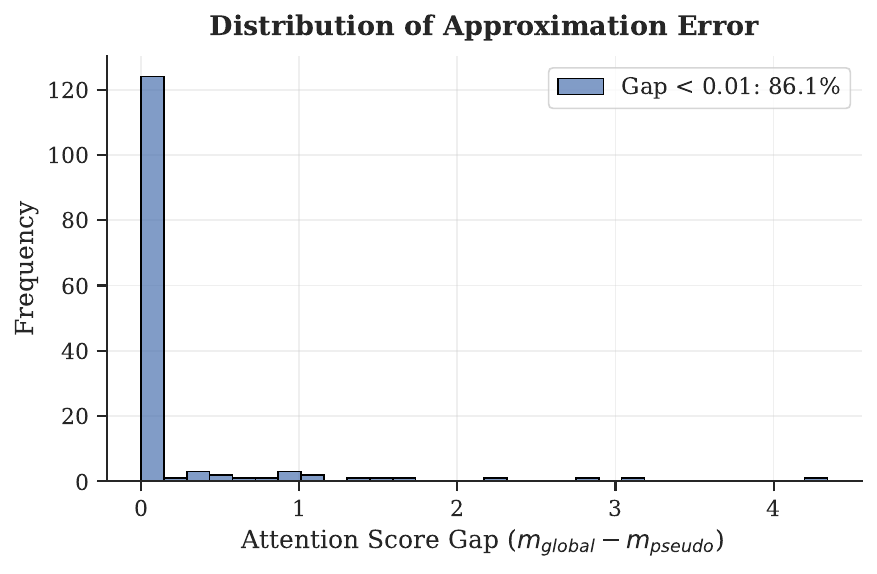}
\caption{Distribution of the approximation error ($m_i - \tilde{m}_i$) by layer. The gap is effectively zero for most of conditions, confirming that the maximum attention score is dominated by sinks and local tokens. This verifies that our Pseudo-Max approximation serves as a robust safe relaxation.}
\label{fig:pseudomax}
\end{figure}

\subsection{Downstream Performance}\label{sec:accuracy}

\begin{table*}[t]
\caption{RULER Benchmark Results (32k Context). Comparison of Llama-3.1-8B-Instruct Base, StreamingLLM, KIVI (2-bit, group=32), Quest (ratio=0.5), Twilight, and FFD. We report the aggregated score (0--100) for each task category. \textbf{Bold} indicates the best performance among sparse methods (excluding Base). Abbreviations: SK (SingleKey), MK (MultiKey), MQ (MultiQuery), MV (MultiValue), VT (Variable Tracking), CWE (Common Words Extraction), FWE (Frequent Words Extraction). Avg is the simple average over all 13 RULER subtasks.}
\label{tab:ruler}
\begin{center}
\resizebox{\textwidth}{!}{%
\begin{tabular}{lcccccccccccccc}
\toprule
Method & SK-1 & SK-2 & SK-3 & MK-1 & MK-2 & MK-3 & MQ & MV & VT & SQuAD & Hotpot & CWE & FWE & Avg \\
\midrule
Base & 100.0 & 100.0 & 100.0 & 98.0 & 100.0 & 99.0 & 98.5 & 98.5 & 99.6 & 67.0 & 56.0 & 67.8 & 93.0 & 90.6 \\
StreamingLLM & 3.0 & 0.0 & 0.0 & 2.0 & 2.0 & 1.0 & 0.0 & 0.0 & 0.0 & 11.0 & 9.0 & 0.2 & 51.0 & 6.1 \\
KIVI & \textbf{100.0} & 99.0 & 98.0 & 93.0 & 93.0 & 48.0 & 86.8 & 86.0 & 99.2 & 63.0 & \textbf{54.0} & 56.4 & \textbf{93.3} & 82.3 \\
Quest (0.5) & \textbf{100.0} & \textbf{100.0} & \textbf{100.0} & \textbf{96.0} & 64.0 & 6.0 & 98.3 & 96.3 & \textbf{99.4} & 65.0 & 49.0 & 15.8 & 71.0 & 73.9 \\
Twilight & \textbf{100.0} & 88.0 & 96.0 & 88.0 & 76.0 & 37.0 & 56.5 & 60.3 & 98.2 & 60.0 & 49.0 & 12.7 & 86.3 & 69.8 \\
FFD ($\delta=5$) & \textbf{100.0} & 99.0 & 99.0 & \textbf{96.0} & 97.0 & 78.0 & 98.3 & 97.8 & 96.8 & 65.0 & 52.0 & 61.0 & 93.0 & 87.1 \\
FFD ($\delta=7$) & \textbf{100.0} & \textbf{100.0} & 99.0 & \textbf{96.0} & \textbf{98.0} & \textbf{95.0} & \textbf{99.5} & \textbf{98.5} & 98.4 & \textbf{66.0} & 52.0 & 67.0 & 92.7 & \textbf{89.4} \\
\bottomrule
\end{tabular}%
}
\end{center}
\end{table*}

\begin{table*}[t]
\caption{LongBench Results. Comparison of Base, StreamingLLM, KIVI (2-bit), Quest, Twilight, and FFD. We report the average score (\%) for each category. \textbf{Bold} indicates the best performance among sparse methods (excluding Base). Twilight results are based on our own reproduction.}
\label{tab:longbench}
\begin{center}
\begin{small}
\begin{sc}
\begin{tabular}{lccccccc}
\toprule
Method & S-Doc & M-Doc & Summ & Few-Shot & Syn & Code & Avg \\
\midrule
Base & 24.02 & 15.24 & 16.57 & 44.10 & 32.70 & 24.68 & 26.22 \\
StreamingLLM & 3.44 & 3.12 & 5.92 & 22.49 & 0.02 & 22.55 & 9.59 \\
KIVI & 23.46 & 14.44 & 15.46 & 44.17 & 30.48 & 24.94 & 25.49 \\
Quest & 23.13 & 14.33 & 15.29 & 43.69 & 30.28 & \textbf{29.36} & 26.01 \\
Twilight & 23.25 & 15.53 & \textbf{16.97} & 43.73 & \textbf{32.37} & 25.23 & 26.18 \\
FFD ($\delta=5$) & 23.36 & 15.35 & 16.40 & 44.09 & 30.04 & 25.94 & 25.86 \\
FFD ($\delta=7$) & \textbf{24.00} & \textbf{15.78} & 16.11 & \textbf{44.19} & 31.83 & 26.21 & \textbf{26.35} \\
\bottomrule
\end{tabular}
\end{sc}
\end{small}
\end{center}
\vskip -0.1in
\end{table*}

To validate the effectiveness of FFD in real-world long-context scenarios, we evaluated our method on the RULER benchmark \citep{hsieh2024ruler} with a context length of 32K. Table \ref{tab:ruler} presents the results comparing FFD against the dense Llama-3.1-8B-Instruct baseline.

Our method maintains high performance across all categories. Notably, on the challenging ``Needle In A Haystack" (NIAH) Single and Multi-Key retrieval tasks, FFD achieves near-perfect scores (e.g., 100.0 on SK-1/SK-2, 99.0 on SK-3), matching the dense baseline. Even on complex reasoning tasks like Variable Tracking (VT), the performance degradation is minimal (98.4 vs 99.6), demonstrating that our 2-bit high-fidelity filter successfully preserves the critical information required for multi-hop reasoning.

We further evaluate FFD on LongBench \citep{bai2024longbench}, a comprehensive benchmark for long-context understanding containing 21 datasets across 6 categories. Table \ref{tab:longbench} reports the category-level results across all compared methods.

FFD demonstrates strong performance across all six categories. With an overall average of 26.35, FFD ($\delta=7$) achieves the highest aggregate score among all methods, reinforcing our hypothesis that 2-bit content-aware scanning preserves the semantic fidelity required for diverse long-context understanding tasks.

\textbf{Comparison with Top-$p$ Selection.} To provide a direct comparison with the Top-$p$ selection paradigm, we additionally benchmark Twilight~\citep{lin2025twilight} under its original operating point (budget=8192, $p=0.85$). As shown in Tables~\ref{tab:ruler} and \ref{tab:longbench}, FFD consistently outperforms Twilight by substantial margins on both benchmarks, while maintaining a clear advantage over Quest. The multi-peak retrieval tasks (MK-2, MK-3) particularly highlight the weakness of fixed-budget selectors: Quest drops sharply on MK-3 (6.0), whereas FFD with top-$\delta$ adapts to the distributed attention pattern (95.0). We omit throughput comparison with Twilight because its official implementation is not fully open-source, making a fair system-level speed comparison infeasible.

\subsection{Ablation Study}
\label{sec:ablation}

To disentangle the contributions of our three co-design axes---scan precision (FP16 vs.\ 2-bit), selection rule (top-$k$ vs.\ top-$\delta$), and execution path (split vs.\ fused+fullgraph)---we conduct a controlled ablation study under matched conditions. All variants are evaluated with a 16K prefill and 128 decode steps on an RTX 4090 (batch size 1). The full ablation table and detailed analysis are presented in Appendix~\ref{app:ablation}. In summary, the fused 2-bit top-$\delta$ configuration (FFD) achieves 53.96 tok/s with RULER AVG of 90.40 and LongBench AVG of 25.90---a synergistic 2.2$\times$ throughput gain over the split-execution counterpart---demonstrating that the algorithmic and system designs are inseparably linked.

\textbf{Selection-Rule Overhead.} To isolate the algorithmic cost of each selection rule, we calibrate top-$k$ and top-$p$ to match the average keep ratio of top-$\delta$ ($\sim$27.3\%). Table~\ref{tab:selection_overhead} reports the per-head latency breakdown under matched sparsity. At nearly identical keep ratios, top-$\delta$ is $2\times$ faster than top-$k$ and $25\times$ faster than top-$p$, confirming that the top-$\delta$ formulation eliminates the global synchronization bottleneck inherent to cumulative probability methods.

\begin{table}[t]
\caption{Per-head selection latency at matched keep ratio ($\sim$27.3\%).}
\label{tab:selection_overhead}
\begin{center}
\begin{small}
\begin{sc}
\begin{tabular}{lcc}
\toprule
Selection Rule & Keep Ratio & Latency (ms/head) \\
\midrule
top-$\delta$ & 0.273 & 0.0044 \\
top-$k$      & 0.275 & 0.0090 \\
top-$p$      & 0.273 & 0.1115 \\
KIVI2 (dense) & 1.000 & 0.0186 \\
\bottomrule
\end{tabular}
\end{sc}
\end{small}
\end{center}
\vskip -0.1in
\end{table}

\textbf{Generalization to Qwen2.5.} We further validate FFD on Qwen2.5-7B-Instruct to assess generalization beyond the Llama architecture. Under the same $\delta=7$ configuration, FFD achieves 85.90 RULER AVG and 30.11 LongBench AVG, outperforming KIVI (72.01 / 28.34), Quest (71.93 / 29.49), and StreamingLLM (16.24 / 21.45). These results confirm that FFD's content-aware scanning and top-$\delta$ selection generalize effectively across model families without architecture-specific tuning.

\section{Discussion}

\subsection{Retrieval Fidelity Analysis}
We evaluate the ``quality" of the retrieved tokens by measuring two metrics:
\begin{enumerate}
    \itemsep0em
    \item \textit{Recall:} The cosine similarity between the sparse attention output and the dense (exact) output.
    \item \textit{LSE Error:} The Max Absolute Error of the Log-Sum-Exp (LSE) value, which proxies the stability of the Softmax distribution.
\end{enumerate}

\begin{figure}[t]
     \centering
     \begin{subfigure}[b]{0.48\textwidth}
         \centering
         \includegraphics[width=\textwidth]{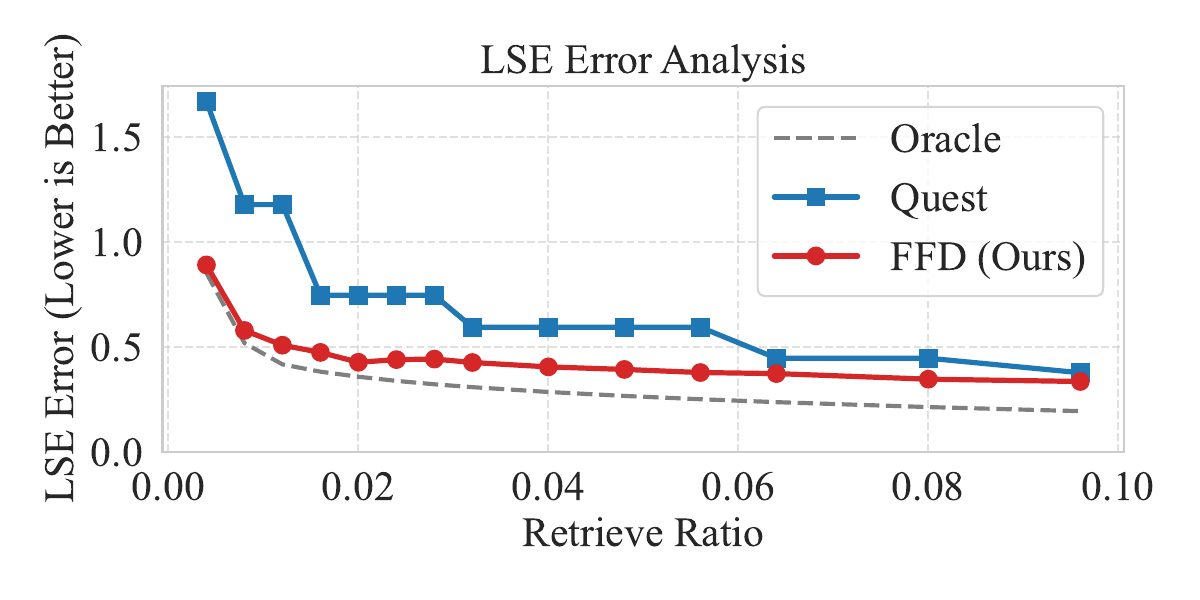}
         \label{fig:main-figure-sub1}
     \end{subfigure}
     \hfill
     \begin{subfigure}[b]{0.48\textwidth}
         \centering
         \includegraphics[width=\textwidth]{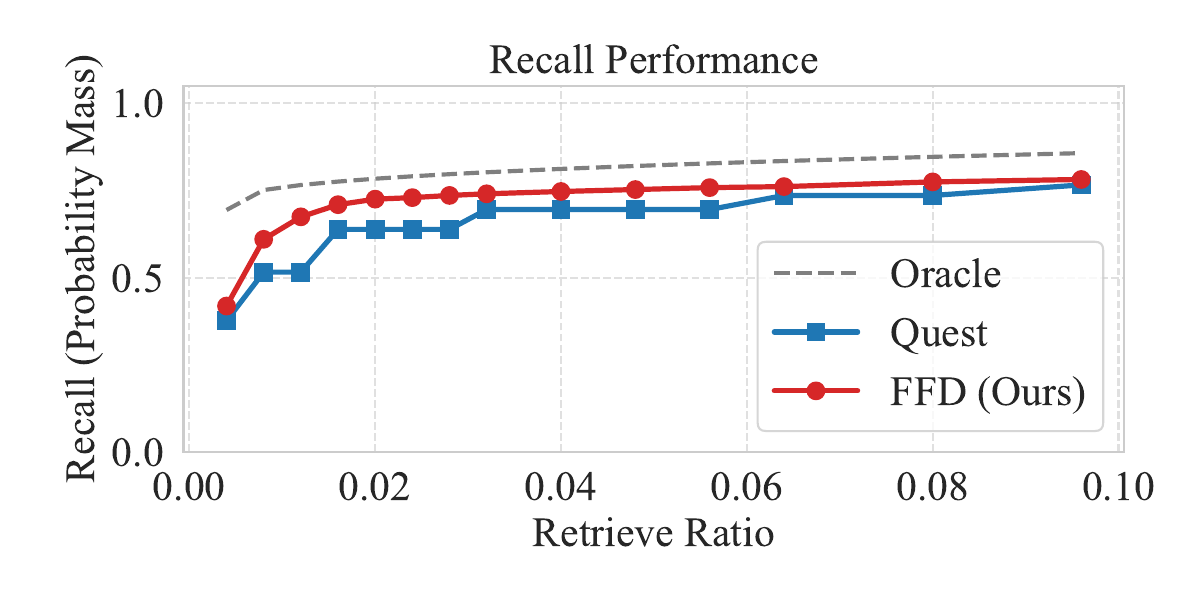}
         \label{fig:main-figure-sub2}
     \end{subfigure}
     
     \caption{Quality vs. IO Budget on Llama. FFD maintains lower LSE error while achieving higher recall than Quest under the same retrieval budget, showing that high-fidelity scanning preserves distributional accuracy and hit rate.}
     \label{fig:main-figure}
\end{figure}

FFD achieves significantly higher recall than Quest across all sparsity ratios, confirming the ``Thumbnail vs Bounding Box" hypothesis: 2-bit quantization preserves the geometric directionality of keys, enabling more precise filtering than min/max bounds. FFD also minimizes the error in the normalization term (LSE), which is critical for preventing ``collapse" in the attention distribution.

\begin{figure}[t]
     \centering
     \includegraphics[width=0.6\textwidth]{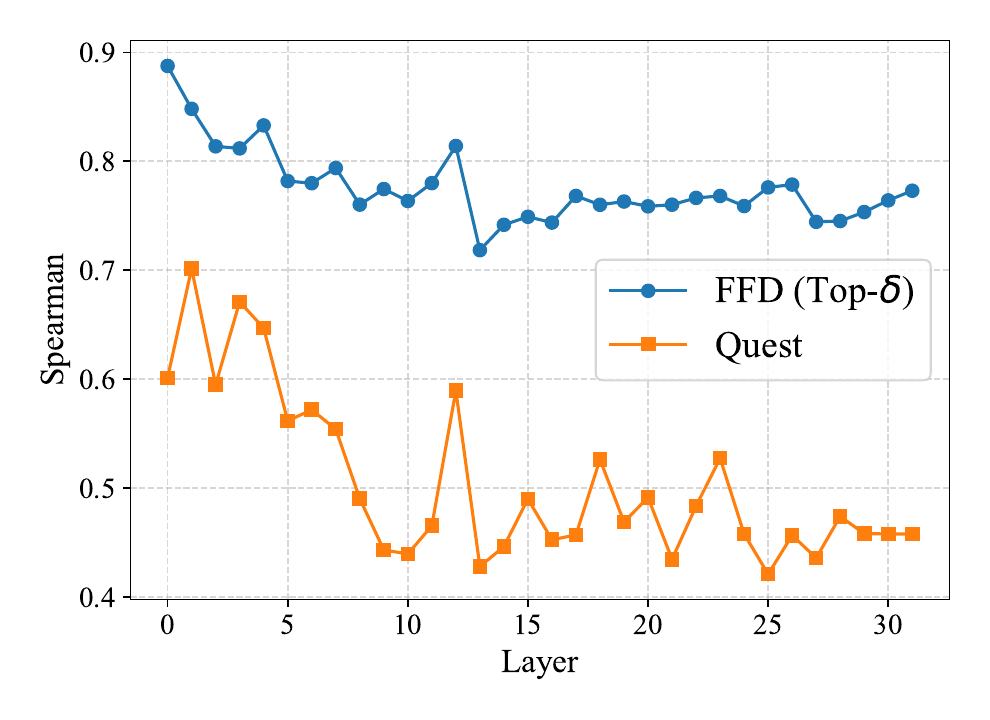}
     \caption{Layer-wise Correlation Trends. FFD achieves consistently higher Spearman correlations across all layers compared to Quest, indicating better preservation of the attention distribution's structural properties.}
     \label{fig:layer_correlation}
\end{figure}

Figure~\ref{fig:layer_correlation} shows that FFD consistently maintains higher Spearman correlation than Quest across all layers, indicating that quantization-based scanning better preserves the relative ordering of attention scores, whereas the bounding-box approximation suffers from degradation.

\subsection{Pseudo-Max Robustness}
FFD avoids a global reduction by estimating the threshold using a Pseudo-Max computed from sink and local tokens.
Figure~\ref{fig:pseudomax} reports the empirical gap between the global maximum and the Pseudo-Max in a simulation that mirrors the sink/local bias.

We analyze the implications of the approximation gap where $\tilde{m}_i < m_i$. As shown in Figure \ref{fig:pseudomax}, such gaps are rare and numerically negligible. Crucially, even when they occur, this underestimation represents a fail-safe mechanism.
Recall our filtering condition: $s_{ij} \ge \tilde{m}_i - \delta$.
Since $\tilde{m}_i \le m_i$ by definition, it follows that our threshold is lower than the ideal threshold: $\tilde{m}_i - \delta \le m_i - \delta$.
Consequently, any token that would be selected by the global max is \textit{guaranteed} to be selected by the Pseudo-Max. In these rare ``failure" cases, FFD automatically falls back to a more conservative retrieval policy, preserving slightly more tokens. This structural asymmetry ensures that approximation errors result only in a minor increase in I/O budget (efficiency penalty), rather than the loss of critical information (accuracy penalty).

\begin{figure}[t]
     \centering
     \includegraphics[width=0.6\textwidth]{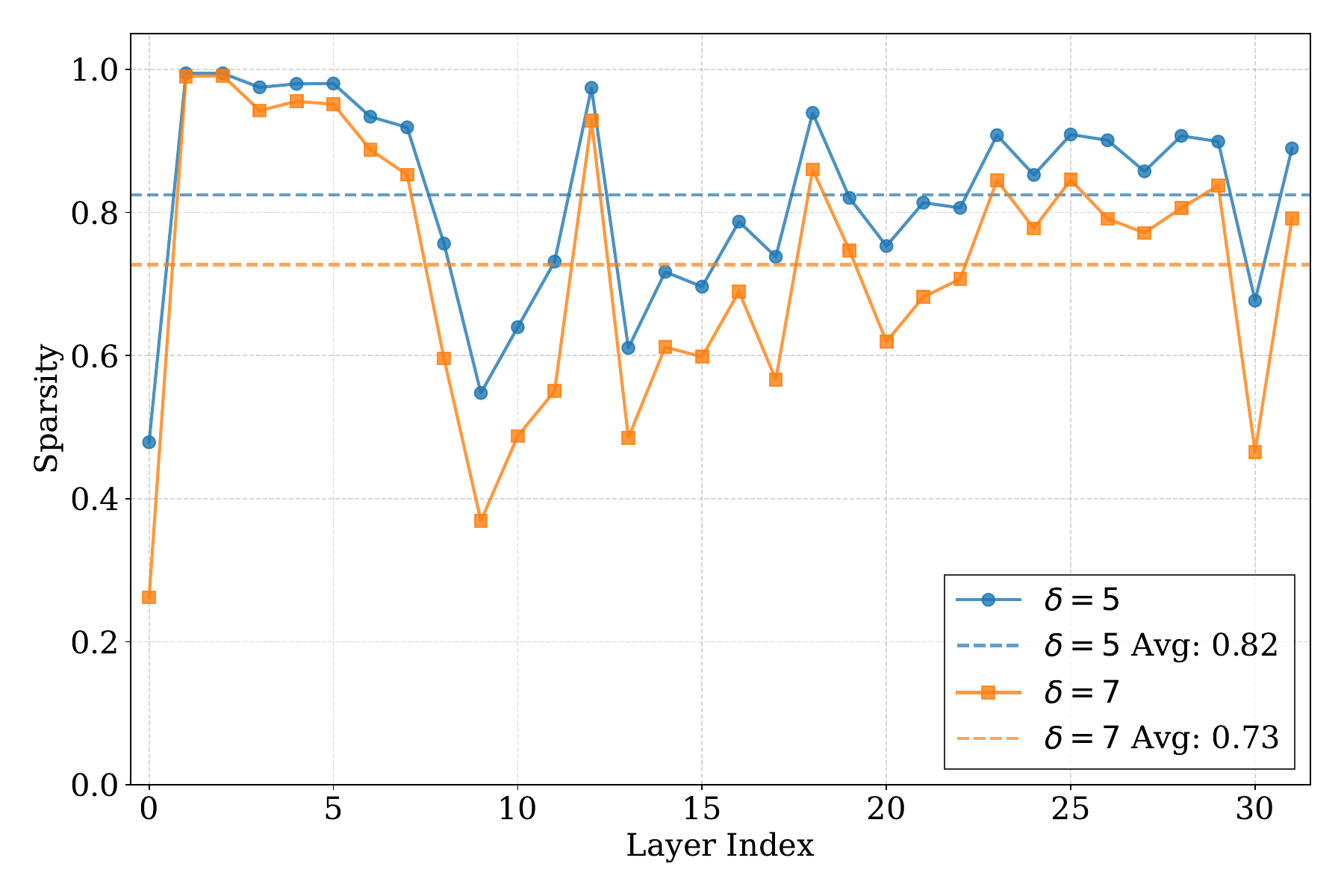}
     \caption{Layer-wise sparsity distribution on LongBench (Llama-3.1-8B). FFD achieves significant computational savings, maintaining average sparsity levels of 82\% and 73\% under $\delta=5$ and $\delta=7$ configurations respectively.}
     \label{fig:skip_ratio}
\end{figure}

\subsection{Sparsity Analysis}
FFD exhibits layer-wise adaptive sparsity governed by $\delta$, with higher sparsity at smaller $\delta$. As shown in Figure~\ref{fig:skip_ratio}, FFD maintains average sparsity of 82\% and 73\% under $\delta=5$ and $\delta=7$ respectively---operating with substantially lower compute budget than dense attention---while Tables~\ref{tab:ruler} and \ref{tab:longbench} confirm no accuracy sacrifice. By mitigating redundant I/O overhead, FFD achieves superior inference speed compared to other approaches.

\subsection{Limitations}
We identify several limitations that point to future work. First, while our head-wise variation analysis (Appendix~\ref{app:headwise}) indicates that FFD adapts naturally to diverse per-head sparsity patterns, the characterization is based on controlled single-sample statistics and has not been systematically validated across diverse task distributions. Second, the failure-case profiling in Appendix~\ref{app:failure} uses proxy metrics (top-1/top-2 gap and score standard deviation) rather than direct end-to-end task-failure attribution; a more comprehensive causal attribution remains open. Third, FFD has been validated on Llama and Qwen architectures employing grouped-query attention (GQA); its applicability to MLA-style architectures~\citep{liu2025deepseek} with fundamentally different key/value factorization has not yet been assessed. Finally, while our component-wise ablation isolates the major contributors to the end-to-end speedup, a fine-grained latency breakdown within the fused execution path is deferred to future work.

\section{Conclusion}

We presented FFD, a hardware-algorithm co-design that rethinks sparse attention as geometric filtering rather than metadata indexing. By utilizing 2-bit quantization for high-fidelity scanning and attention sinks for adaptive thresholding, we break the dependency on rigid top-$k$ budgets. Our results suggest that future long-context inference should prioritize ``compute-for-IO'' trade-offs---spending cheap FLOPs on low-bit scanning to save expensive HBM bandwidth.

\section*{Acknowledgments}
This work was supported by the Science and Technology Commission of Shanghai Municipality (No. 25DZ3100402).

\section*{Impact Statement}
This work improves the decoding efficiency of large language models through algorithmic and kernel optimization. It does not involve human subjects, personal information, or sensitive content, and we identify no extra ethical concerns specific to our work.

\bibliographystyle{unsrtnat}
\bibliography{main}

\appendix
\appendix

\section{Details of Quantization Implementation}
\label{app:quant}

FFD employs a Symmetric Mid-rise Uniform Quantization scheme. Unlike standard integer mapping (e.g., $\{-1, 0, 1\}$) which wastes quantization bins on a zero state, our approach utilizes a strictly non-zero grid to maximize the information capacity of the limited 2-bit budget.

\subsection{Formulation}

We define a 4-level quantization codebook $\mathcal{C}$ centered around zero but excluding zero itself:
\begin{equation}
    \mathcal{C} = \{-1.5, -0.5, +0.5, +1.5\}
\end{equation}
For a given input channel vector $\mathbf{x}$, we compute the scaling factor $s$ based on the absolute maximum value to ensure coverage:
\begin{equation}
    s = \frac{\max(|\mathbf{x}|)}{1.5}
\end{equation}
The quantized representation $\mathbf{x}_q$ is obtained by projecting the scaled input onto the nearest neighbor in the codebook:
\begin{equation}
    \mathbf{x}_q = \underset{c \in \mathcal{C}}{\arg\min} \left| \frac{\mathbf{x}}{s} - c \right|
\end{equation}
The reconstructed approximation is then $\hat{\mathbf{x}} = \mathbf{x}_q \cdot s$.

\subsection{Design Rationale}

We explicitly choose a mid-rise quantizer (no zero point) over a mid-tread quantizer to align with the distributional properties of modern LLMs. Key vectors, particularly after Rotary Positional Embeddings (RoPE), are typically dense with magnitude distributed across dimensions. A standard zero-inclusive grid effectively utilizes only 3 states ($\log_2 3 \approx 1.58$ bits), wasting roughly 20\% of the representational capacity of a 2-bit system. By forcing every dimension to take a non-zero value, our mid-rise scheme ensures full bit utilization and prevents ``information collapse", preserving the directional fidelity required for high-dimensional dot-product scanning.

\section{Error Analysis of Quantized Scanning}
\label{app:error_analysis}

In this section, we provide a theoretical bound on the approximation error introduced by the 2-bit quantization during the scanning phase. We demonstrate that as the head dimension $d$ increases, the probability of the approximation error exceeding the selection safety margin decays exponentially.

\subsection{Quantization Error Bound}

Let $\mathbf{q}, \mathbf{k} \in \mathbb{R}^d$ denote the query and key vectors of dimension $d$ (head dimension). In our FFD framework, the exact pre-softmax attention score is defined as $s = \mathbf{q}^\top \mathbf{k} / \sqrt{d}$. The scanner approximates this using the 2-bit quantized key $\hat{\mathbf{k}} = \mathbf{k} + \mathbf{\epsilon}$, where $\mathbf{\epsilon} \in \mathbb{R}^d$ is the quantization noise vector. The approximate score is given by:
\begin{equation}
    \hat{s} = \frac{\mathbf{q}^\top (\mathbf{k} + \mathbf{\epsilon})}{\sqrt{d}} = s + \frac{\mathbf{q}^\top \mathbf{\epsilon}}{\sqrt{d}}
\end{equation}
The approximation error is thus $e_{approx} = \frac{\mathbf{q}^\top \mathbf{\epsilon}}{\sqrt{d}}$.

\begin{assumption}[Sub-Gaussian Quantization Noise]
\label{ass:subgaussian}
While standard quantization theory models noise as uniform for high bit-depths, for 2-bit quantization, we adopt a more robust assumption: the noise components $\epsilon_i$ are independent, mean-zero, and $\sigma^2$-sub-Gaussian. This means $\mathbb{E}[\exp(t\epsilon_i)] \le \exp(\sigma^2 t^2 / 2)$ for all $t \in \mathbb{R}$. For a symmetric 4-level quantizer with step size $\Delta$, the variance and sub-Gaussian parameter are determined by the bin width $\Delta \approx \max(|\mathbf{k}|)/1.5$.
\end{assumption}

\begin{theorem}[Concentration of Scanning Error]
\label{thm:concentration_error}
Let the query vector $\mathbf{q}$ have a bounded $\ell_2$-norm such that $\|\mathbf{q}\|_2 \le \alpha \sqrt{d}$, where $\alpha$ is a constant determined by the model's normalization layers (e.g., RMSNorm). Under \cref{ass:subgaussian}, the probability that the magnitude of the scanning error $|e_{approx}|$ exceeds a deviation $\lambda$ is bounded by:
\begin{equation}
    P(|e_{approx}| \ge \lambda) \le 2 \exp \left( -\frac{\lambda^2}{2 \sigma^2 \alpha^2} \right)
\end{equation}
\end{theorem}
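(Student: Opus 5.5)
The plan is a standard Chernoff (Cramér) argument for a weighted sum of independent sub-Gaussian variables. We treat $\mathbf{q}$ as fixed, or condition on it, since the randomness in \cref{ass:subgaussian} lives entirely in $\bm{\epsilon}$. Write the error as
\[
e_{approx} = \sum_{i=1}^d w_i \epsilon_i, \qquad w_i = \frac{q_i}{\sqrt{d}} .
\]
By the norm hypothesis,
\[
\|w\|_2^2 = \frac{\|\mathbf{q}\|_2^2}{d} \le \alpha^2 .
\]
This is exactly why the $\sqrt{d}$ scaling in the score cancels the $\sqrt{d}$ growth of $\|\mathbf{q}\|_2$, and why no $d$ appears in the final bound.

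First we bound the moment generating function of $e_{approx}$. For any $t \in \mathbb{R}$, independence of the $\epsilon_i$ and the sub-Gaussian condition applied with parameter $t w_i$ give
\[
\mathbb{E}\bigl[e^{t e_{approx}}\bigr] = \prod_{i=1}^d \mathbb{E}\bigl[e^{t w_i \epsilon_i}\bigr] \le \prod_{i=1}^d \exp\!\left(\tfrac{\sigma^2 t^2 w_i^2}{2}\right) = \exp\!\left(\tfrac{\sigma^2 t^2 \|w\|_2^2}{2}\right) \le \exp\!\left(\tfrac{\sigma^2\alpha^2 t^2}{2}\right).
\]
So $e_{approx}$ is itself $\sigma^2\alpha^2$-sub-Gaussian, and it has mean zero.

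Next, for $t>0$ Markov's inequality gives
\[
P(e_{approx}\ge\lambda) \le \exp\!\left(-t\lambda + \tfrac{\sigma^2\alpha^2 t^2}{2}\right).
\]
We optimize at $t = \lambda/(\sigma^2\alpha^2)$, which yields
\[
P(e_{approx}\ge\lambda) \le \exp\!\left(-\tfrac{\lambda^2}{2\sigma^2\alpha^2}\right).
\]
The same argument applied to $-e_{approx}$ (equivalently, using $t<0$) gives the identical lower-tail bound, since the sub-Gaussian MGF bound holds for all real $t$. A union bound over the two tails then produces the factor $2$ in the statement.

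The analysis is routine. The only real obstacle is justifying the modelling step, not the inequality: deterministic quantization noise is not literally independent of $\mathbf{k}$, and the scale $s$ couples coordinates. We therefore state explicitly that the theorem is proved under \cref{ass:subgaussian} as given, treating $\mathbf{q}$ as deterministic or independent of $\bm{\epsilon}$. We also note that bounded noise $|\epsilon_i|\le \Delta/2$ would supply a concrete $\sigma = \Delta/2$ via Hoeffding's lemma.
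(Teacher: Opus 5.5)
Your proposal is correct and matches the paper's argument. Both write $e_{approx}$ as a weighted sum with weights $q_i/\sqrt{d}$, use independence to get the sub-Gaussian parameter $\sigma^2\|\mathbf{q}\|_2^2/d \le \sigma^2\alpha^2$, and apply the two-sided sub-Gaussian tail bound; the only difference is that you derive the MGF product bound and the Chernoff optimization explicitly, where the paper cites them as standard facts.
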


\begin{proof}
The error term $e_{approx}$ is a linear combination of independent sub-Gaussian random variables:
\begin{equation}
    e_{approx} = \sum_{i=1}^d \left( \frac{q_i}{\sqrt{d}} \right) \epsilon_i
\end{equation}
The sum of independent sub-Gaussian variables $X = \sum w_i \epsilon_i$ is itself sub-Gaussian with parameter $\|\mathbf{w}\|_2^2 \sigma^2$. Here, the weights are $w_i = q_i / \sqrt{d}$. The effective sub-Gaussian parameter for $e_{approx}$ is:
\begin{equation}
    \sigma_{eff}^2 = \sum_{i=1}^d \left( \frac{q_i}{\sqrt{d}} \right)^2 \sigma^2 = \frac{\sigma^2}{d} \|\mathbf{q}\|_2^2
\end{equation}
Using the $\| \mathbf{q} \|_2 \le \alpha \sqrt{d}$ bound (where typically $\alpha \approx 1$ in normalized LLMs), we have $\sigma_{eff}^2 \le \sigma^2 \alpha^2$. Applying the general Hoeffding-type bound for sub-Gaussian variables:
\begin{equation}
    P(|e_{approx}| \ge \lambda) \le 2 \exp \left( -\frac{\lambda^2}{2 \sigma_{eff}^2} \right) \le 2 \exp \left( -\frac{\lambda^2}{2 \sigma^2 \alpha^2} \right) \qedhere
\end{equation}
\end{proof}

\subsection{Reliability of Top-$\delta$ Selection}

Based on the error bound derived in \cref{thm:concentration_error}, we can now formally justify the FFD selection strategy. The primary goal is to minimize the False Negative Rate (FNR)---missing a token that is truly significant.

Let $\delta_{ideal}$ be the theoretically optimal threshold in log-space (Eq.~\ref{eq:threshold}). In FFD, we set the operational threshold $\delta = \delta_{ideal} + \lambda_{margin}$, where $\lambda_{margin}$ acts as a safety buffer. 
A False Negative occurs if a token satisfies $s \ge m - \delta_{ideal}$ but is rejected by the scanner: $\hat{s} < \tilde{m} - \delta$. 

Given $\hat{s} = s + e_{approx}$ and our definition of $\delta$, the rejection condition implies:
\begin{equation}
    s + e_{approx} < \tilde{m} - (\delta_{ideal} + \lambda_{margin})
\end{equation}
Using $s \ge m - \delta_{ideal}$ and the fact that the Pseudo-Max $\tilde{m}$ is a lower bound on the true max $m$ ($\tilde{m} \le m$), we obtain:
\begin{equation}
    e_{approx} < (\tilde{m} - m) - \lambda_{margin} \le -\lambda_{margin}
\end{equation}
Thus, a False Negative requires the quantization error to exceed the safety margin in the negative direction. According to \cref{thm:concentration_error}, this probability is:
\begin{equation}
    P(\text{False Negative}) \le \exp \left( - \frac{\lambda_{margin}^2}{2 \sigma^2 \alpha^2} \right)
\end{equation}

This result provides a strong theoretical guarantee: the probability of missing a critical token decays exponentially with the square of the safety margin. In practice, setting $\delta=5$ or $7$ provides a sufficiently large $\lambda_{margin}$ relative to the noise scale $\sigma$, explaining the near-perfect recall observed in our LongBench and RULER experiments despite the 2-bit representation.

\section{Bandwidth Proportionality of Quantization Depth}
\label{app:bandwidth}

A central design choice in FFD is the 2-bit thumbnail with 8-bit residual (``2+8'') configuration. To justify this choice empirically for the decode setting, we benchmark the decoding kernel latency under three quantization configurations---2+8, 4+8, and 8+8---using real text from the LongBench dataset on an RTX 4090. Table~\ref{tab:quant_depth} reports the results.

\begin{table}[h]
\caption{Decode kernel latency under different quantization depths (RTX 4090, 16k context).}
\label{tab:quant_depth}
\begin{center}
\begin{tabular}{lcc}
\toprule
Configuration & Latency (ms/head) & Speedup vs.\ FlashAttn \\
\midrule
2+8 (FFD)   & 0.105 & 10.65$\times$ \\
4+8         & 0.201 & 5.60$\times$ \\
8+8         & 0.406 & 2.82$\times$ \\
\bottomrule
\end{tabular}
\end{center}
\end{table}

The kernel speed is nearly proportional to the scan bandwidth: 2-bit scanning incurs one-quarter the I/O of 8-bit and half that of 4-bit. This proportionality is a distinctive characteristic of the decode setting where memory bandwidth dominates; it does not necessarily hold in prefill, where computation is the primary bottleneck and 4-bit may be a natural choice due to better hardware support for 4-bit arithmetic. This finding motivates the 2+8 design as the bandwidth-optimal configuration for long-context decoding.

\section{Head-Wise Variation under Fixed Delta}
\label{app:headwise}

To assess whether FFD implicitly assumes uniform sparsity across attention heads, we conduct a controlled head-wise analysis on a sample from LongBench with $\delta=5$ fixed. Across $1024$ layer-head pairs (32 layers $\times$ 32 heads on Llama-3.1-8B), we measure the per-head keep ratio and salient hit rate. Table~\ref{tab:headwise} summarizes the results.

\begin{table}[h]
\caption{Head-wise variation statistics ($\delta=5$, single LongBench sample, 1024 layer-head pairs).}
\label{tab:headwise}
\begin{center}
\begin{tabular}{lcc}
\toprule
Statistic & Keep Ratio (\%) & Salient Hit Rate (\%) \\
\midrule
Mean      & 6.42  & 87.45 \\
Std.\ dev.\ & 14.14 & -- \\
Min       & 0.62  & -- \\
Max       & 100.0 & -- \\
\bottomrule
\end{tabular}
\end{center}
\end{table}

The keep ratio spans from $0.62\%$ to $100\%$ with a standard deviation of $14.14$ percentage points, indicating that FFD does \emph{not} impose a uniform sparsity structure. Rather, the same $\delta$-threshold rule allows different heads to exhibit markedly different retention behavior, with the mean salient hit rate remaining at $87.45\%$. Notably, both Quest and Twilight require retaining the first two layers' full-precision KV cache to avoid accuracy collapse, whereas FFD requires no such early-layer exception---suggesting that the content-aware scanning mechanism naturally captures the distributional properties that lead other methods to rely on architectural hand-tuning.

\textbf{Scope.} These statistics are derived from a single LongBench sample under a fixed $\delta=5$. They should be interpreted as a mechanistic characterization rather than as claims about worst-case behavior across all task distributions.

\section{Failure-Case Profiling}
\label{app:failure}

To characterize when FFD's approximation is most stressed, we profile failure-prone conditions using two proxy metrics: the gap between the top-1 and top-2 attention scores ($\text{top1\_top2\_gap}$) and the standard deviation of attention scores ($\text{score\_std}$). A layer-head pair is flagged as a \emph{proxy-error} case if it falls into the lowest decile (bottom $10\%$) on either metric, and as a \emph{high-error} case if it falls into the lowest decile on both. These are proxy indicators of flat or multi-peak attention distributions where quantization-based selection is most challenged.

We profile $3392$ layer-head pairs across $4$ attention dumps (2 models $\times$ 2 LongBench tasks). Table~\ref{tab:failure} summarizes the results.

\begin{table}[h]
\caption{Failure-case profiling across 3392 layer-head pairs.}
\label{tab:failure}
\begin{center}
\begin{tabular}{lccc}
\toprule
Case Type & Proportion (\%) & Mean Keep Ratio (\%) & Mean Salient Hit Rate (\%) \\
\midrule
Regular      & 80.84 & 2.43  & 63.74 \\
Proxy-error  & 19.16 & 13.15 & 74.67 \\
\bottomrule
\end{tabular}
\end{center}
\end{table}

The dominant response to challenging distributions is \emph{conservative retrieval}: the mean keep ratio rises from $2.43\%$ (regular) to $13.15\%$ (proxy-error), while the salient hit rate actually \emph{improves} from $63.74\%$ to $74.67\%$. This confirms the structural asymmetry discussed in Section~5.2: when the pseudo-max approximation deviates from the true global maximum, the resulting threshold is more permissive (not more restrictive), so the selection defaults to retaining more blocks rather than dropping salient ones. The primary consequence of these cases is therefore an efficiency penalty rather than an accuracy failure. We caution that proxy-error flags are based on attention-level statistics and have not been causally linked to downstream task errors.

\section{Cross-Architecture Generalization: Qwen2.5}
\label{app:qwen}

Table~\ref{tab:qwen_ruler} reports the full per-subtask RULER results on Qwen2.5-7B-Instruct.

\begin{table}[h]
\caption{Qwen2.5-7B-Instruct RULER results (32k context, all 13 subtasks).}
\label{tab:qwen_ruler}
\begin{center}
\footnotesize
\begin{tabular}{lcccccccccccccc}
\toprule
Method & SK-1 & SK-2 & SK-3 & MK-1 & MK-2 & MK-3 & MQ & MV & VT & SQA & HOT & CWE & FWE & Avg \\
\midrule
Base & 100 & 100 & 100 & 100 & 96 & 92 & 97.8 & 94.3 & 99.4 & 57 & 44 & 61.2 & 88.3 & 86.9 \\
KIVI & 100 & 100 & 86 & 99 & 54 & 7 & 88.5 & 86.3 & 94.4 & 49 & 40 & 49.3 & 82.7 & 72.0 \\
Quest & 100 & 100 & 97 & 98 & 68 & 4 & 96.3 & 92.8 & 99.6 & 39 & 37 & 36.1 & 67.3 & 71.9 \\
Stream. & 3 & 4 & 6 & 8 & 5 & 11 & 7.3 & 8.0 & 11.6 & 18 & 26 & 15.0 & 88.3 & 16.2 \\
Twilight & 100 & 96 & 91 & 85 & 86 & 44 & 66.3 & 73.3 & 80.4 & 44 & 35 & 54.3 & 82.7 & 72.1 \\
FFD ($\delta=7$) & 99 & 99 & 99 & 99 & 95 & 87 & 98.3 & 93.5 & 98.6 & 53 & 40 & 68.4 & 87.0 & 85.9 \\
\bottomrule
\end{tabular}
\end{center}
\end{table}

Table~\ref{tab:qwen_longbench} reports the LongBench breakdown. FFD maintains a clear advantage over all sparse baselines on Qwen, confirming generalization beyond Llama.

\begin{table}[h]
\caption{Qwen2.5-7B-Instruct LongBench results (category averages).}
\label{tab:qwen_longbench}
\begin{center}
\begin{tabular}{lccccccc}
\toprule
Method & S-Doc & M-Doc & Summ & Few-Shot & Syn & Code & Avg \\
\midrule
Base    & 24.35 & 27.88 & 24.35 & 59.50 & 32.99 & 11.71 & 30.13 \\
KIVI    & 24.49 & 27.04 & 23.26 & 59.01 & 24.38 & 11.89 & 28.34 \\
Quest   & 25.00 & 26.68 & 22.70 & 58.81 & 32.18 & 11.61 & 29.49 \\
Stream. & 15.92 & 22.51 & 18.96 & 52.53 &  5.48 & 13.29 & 21.45 \\
Twilight & 26.07 & 30.53 & 23.61 & 59.19 & 32.65 & 11.81 & 30.64 \\
FFD     & 24.65 & 28.38 & 24.23 & 59.31 & 32.30 & 11.79 & 30.11 \\
\bottomrule
\end{tabular}
\end{center}
\end{table}

FFD maintains a clear accuracy advantage over sparse baselines on Qwen while generalizing beyond Llama without architecture-specific tuning. For Twilight, its stronger LongBench average is largely driven by M-Doc and Syn categories; however on RULER retrieval tasks, FFD retains a substantial lead across all key subtasks (e.g., MK-3: 87 vs.\ 44, MQ: 98.3 vs.\ 66.3).

\section{Component-wise Ablation Study}
\label{app:ablation}

To disentangle the contributions of our three co-design axes---scan precision (FP16 vs.\ 2-bit), selection rule (top-$k$ vs.\ top-$\delta$), and execution path (split vs.\ fused+fullgraph)---we conduct a controlled ablation study under matched conditions. All variants are evaluated with a 16k prefill and 128 decode steps on an RTX 4090 (batch size 1). Table~\ref{tab:ablation} reports the end-to-end throughput and downstream task accuracy for six configurations.

\begin{table}[h]
\caption{Component-wise ablation of scan precision, selection rule, and execution path (16k context, RTX 4090).}
\label{tab:ablation}
\begin{center}
\begin{small}
\begin{tabular}{llccc}
\toprule
\# & Variant (Scan -- Rule -- Execution) & Tok/s & RULER AVG & LongBench AVG \\
\midrule
1 & FP16 -- top-$k$ -- split        & 17.43 & 86.82 & 25.87 \\
2 & FP16 -- top-$\delta$ -- split    & 21.64 & 90.39 & 25.90 \\
3 & 2-bit -- top-$k$ -- split       & 14.80 & 86.95 & 25.83 \\
4 & 2-bit -- top-$\delta$ -- split   & 24.65 & 90.40 & 25.90 \\
5 & 2-bit -- top-$k$ -- fused       & 21.21 & 86.95 & 25.83 \\
6 & 2-bit -- top-$\delta$ -- fused (FFD) & \textbf{53.96} & \textbf{90.40} & \textbf{25.90} \\
\bottomrule
\end{tabular}
\end{small}
\end{center}
\end{table}

The ablation reveals three key findings. First, replacing top-$k$ with top-$\delta$ alone improves both accuracy (RULER: 86.82 $\rightarrow$ 90.39) and throughput (17.4 $\rightarrow$ 21.6 tok/s), confirming that top-$\delta$ achieves distribution-adaptive sparsity without the overhead of global synchronization (V1 vs.\ V2). Second, 2-bit scanning is not beneficial in isolation---under top-$k$, the quantized scan degrades throughput (V1 vs.\ V3: 17.4 $\rightarrow$ 14.8) due to compute-bound sorting overhead. However, when paired with top-$\delta$, which reduces selection to a scalar comparison, the reduced memory footprint of 2-bit becomes a strict asset (V2 vs.\ V4: 21.6 $\rightarrow$ 24.7). Third, and most importantly, the system-level fusion and CUDA Graph capture exhibit a \textbf{synergistic amplification} with the algorithmic components. While fusion alone yields a modest 1.4$\times$ gain under top-$k$ (V3 vs.\ V5), the same system optimizations deliver a 2.2$\times$ boost when combined with top-$\delta$ (V4 vs.\ V6), demonstrating that the algorithmic and system designs are inseparably linked.

\section{Component Accuracy Ablation on RULER}
\label{app:component_ruler}

Table~\ref{tab:component_ruler} provides the full per-subtask RULER breakdown for the four non-fused variants of the component ablation study (Section~4.5). The key observation is that CWE degrades severely under the fixed-budget (top-$k$) selector (scores of 24.6 and 25.8), whereas top-$\delta$ restores it to 68.2--69.5. This is because CWE requires the model to count word frequencies across the full context, a task that demands distribution-aware adaptive sparsity rather than a fixed token budget.

\begin{table}[h]
\caption{Per-subtask RULER scores for component ablation variants (Llama-3.1-8B, 32K, split execution).}
\label{tab:component_ruler}
\begin{center}
\footnotesize
\begin{tabular}{lcccc}
\toprule
Subtask & fp-fixed-split & fp-delta-split & q2-fixed-split & q2-delta-split \\
\midrule
SK-1 & 100 & 100 & 100 & 100 \\
SK-2 & 100 & 100 & 100 & 100 \\
SK-3 & 100 & 100 & 100 & 100 \\
MK-1 & 97 & 98 & 97 & 97 \\
MK-2 & 97 & 99 & 98 & 99 \\
MK-3 & 99 & 99 & 96 & 99 \\
MQ   & 98.5 & 99.0 & 97.8 & 99.0 \\
MV   & 99.3 & 100 & 98.8 & 99.8 \\
VT   & 98.6 & 98.2 & 98.4 & 98.0 \\
SQuAD & 67 & 68 & 69 & 68 \\
Hotpot & 54 & 54 & 56 & 55 \\
CWE  & 24.6 & 68.2 & 25.8 & 69.5 \\
FWE  & 93.7 & 91.7 & 93.7 & 91.0 \\
\midrule
Avg  & 86.8 & 90.4 & 87.0 & 90.4 \\
\bottomrule
\end{tabular}
\end{center}
\end{table}

\end{document}